\newtheorem{problem}{Problem}
\newtheorem{theorem}{Theorem}
\newtheorem{remark}{Remark}
\begin{document}

\title{\bf Task Space Tracking of Soft Manipulators: Inner-Outer Loop Control Based on Cosserat-Rod Models}
\author{Tongjia Zheng, Qing Han, and Hai Lin
\thanks{*This work was supported by the National Science Foundation under Grant No. CNS-1830335, IIS-2007949.}
\thanks{Tongia Zheng and Hai Lin are with the Department of Electrical Engineering, University of Notre Dame, Notre Dame, IN 46556, USA. (tzheng1@nd.edu, hlin1@nd.edu.)}
\thanks{Qing Han is with the Department of Mathematics, University of Notre Dame, Notre Dame, IN 46556, USA. (Qing.Han.7@nd.edu.)}
}

\maketitle

\thispagestyle{empty}
\pagestyle{empty}

\begin{abstract}
Soft robots are robotic systems made of deformable materials and exhibit unique flexibility that can be exploited for complex environments and tasks.
However, their control problem has been considered a challenging subject because they are of infinite degrees of freedom and highly under-actuated.
Existing studies have mainly relied on simplified and approximated finite-dimensional models.
In this work, we exploit infinite-dimensional nonlinear control for soft robots.
We adopt the Cosserat-rod theory and employ nonlinear partial differential equations (PDEs) to model the kinematics and dynamics of soft manipulators, including their translational motions (for shear and elongation) and rotational motions (for bending and torsion).
The objective is to achieve position tracking of the whole manipulator in a planar task space by controlling the moments (generated by actuators).
The control design is inspired by the energy decay property of damped wave equations and has an inner-outer loop structure. 
In the outer loop, we design desired rotational motions that rotate the translational component into a direction that asymptotically dissipates the energy associated with position tracking errors. 
In the inner loop, we design inputs for the rotational components to track their desired motions, again by dissipating the rotational energy.
We prove that the closed-loop system is exponentially stable and evaluate its performance through simulations.
\end{abstract}


\section{Introduction}
Soft robots are artificial bodies made of continuously deformable and compliant materials \cite{rus2015design}.
Compared with conventional rigid-body robots, the compliant structure endows soft robots with unique advantages such as being inherently safe when interacting with humans and being able to adapt to constrained and crowded environments.
As a result, soft robots have found many applications including medical surgeries and interventions \cite{burgner2015continuum} and underwater maneuvers \cite{marchese2014autonomous}.

Despite the structural advantages, the control problem of soft robots has been considered a challenging subject \cite{della2021model} for two main reasons.
First, due to the continuous deformability, soft robots have infinite degrees of freedom and are inherently infinite-dimensional nonlinear systems, yet the inputs are always finite-dimensional because we can only equip finitely many independent actuators on the robots. 
Second, dimensionality aside, the number of input variables is usually much less than the number of state variables.
(Here a ``variable'' is a continuous function.)
In one word, soft robots are highly under-actuated systems.

The existing effort has mainly focused on developing finite dimensional approximations of soft robots' kinematics and dynamics, such as those that are based on the piecewise constant curvature (PCC) assumption \cite{webster2010design} or finite element methods (FEM) \cite{polygerinos2015modeling}.
PCC models suggest ignoring the linear strains (like shear) that are sometimes negligible compared with the angular strains (like bending) and further assume that a soft manipulator consists of a finite number of curved segments with constant curvatures.
In this way, the configuration space is approximated by a reduced number of finite-dimensional variables.
This approach has been widely adopted and produced fruitful results ranging from kinematic control to dynamic control in the last two decades \cite{marchese2016dynamics, della2020model, franco2021energy}.
However, this over-simplification suffers from low accuracy and might also produce local singularities, especially in the presence of significant body and external loads.
FEM is a numerical method for solving partial differential equations, which represents the deformable shape as a very large set of mesh nodes together with the information of their neighbors \cite{duriez2013control, goury2018fast}.
While FEM is a powerful tool for simulating deformations of various geometric shapes, it significantly relies on linearization and reduction for control purposes.

Cosserat-rod models, also known as geometrically exact models, are infinite-dimensional models for soft manipulators which are based on continuum mechanics and are considered more accurate \cite{simo1988dynamics, antman2005nonlinear, rucker2011statics, renda2014dynamic}.
They describe the kinematics and dynamics of a soft manipulator using a system of two coupled nonlinear partial differential equations (PDEs), one for the translational/linear deformations and the other for the rotational/angular deformations.
The PCC and FEM models, to some extent, may be considered as finite-dimensional approximations of the Cosserat-rod models \cite{della2021model}.
Moreover, since they are mechanics-based, the role of actuators can be systematically formulated into Cosserat-rod models \cite{till2019real, renda2020geometric, janabi2021cosserat}.
Despite modeling accuracy, these PDEs are nonlinear and highly under-actuated, which are very difficult to control due to the lack of a well-developed control theory for infinite-dimensional nonlinear systems.
As a result, the existing effort of the control design based on Cosserat-rod models has mainly relied on discretization \cite{grazioso2019geometrically, george2020first, doroudchi2021configuration}, or assuming a full-actuation \cite{chang2020energy, zheng2022pde}.

In this work, we design feedback controllers directly based on an under-actuated Cosserat-rod model without approximations.
The control objective is to achieve position tracking of the whole manipulator in a planar task space by designing the internal moments (generated by actuators) which are treated as the input variables.
We recognize that the complete system has a lower-triangular structure in the sense that the rotational motion can be viewed as a virtual input of the translational component.
Therefore, we adopt an inner-outer loop design and exploit the energy decay property of damped wave equations for each loop.
In the outer loop, we design desired rotational motions that rotate the translational motions into a direction that converts the tracking error system into a damped wave equation whose energy is known to decay exponentially.
In the inner loop, we design inputs for the rotational component to track their desired motions, again by converting the rotational error into a damped wave equation.
We prove that this inner-outer loop feedback controller achieves the exponential stability of position tracking in task space.
Simulations are included to validate the performance of the proposed controller.

The rest of the paper is organized as follows.
We introduce the Cosserat-rod model and the control problem in Section \ref{section:modeling}. 
The inner-outer loop control design is presented in Section \ref{section:control}.
In Section \ref{section:simulation}, simulations are conducted to validate the algorithms.
Section \ref{section:conclusion} summarizes the contribution.

\section{Modeling and Problem Statement}
\label{section:modeling}
The special orthogonal group is defined by $SO(3)=\{R\in\mathbb{R}^{3\times3}\mid R^TR=I,\operatorname{det}R=1\}$.
The associated Lie algebra is given by $\mathfrak{so}(3)=\{A\in\mathbb{R}^{3\times3}\mid A=-A^T\}$.
Define the hat operator $(\cdot)^\wedge:\mathbb{R}^3\to\mathfrak{so}(3)$ by the condition that $u^\wedge v=u\times v$ for all $u,v\in\mathbb{R}^3$, where $\times$ denotes the cross product.
Let $(\cdot)^\vee:\mathfrak{so}(3)\to\mathbb{R}^3$ be its inverse operator, i.e., $(u^\wedge)^\vee=u$.

Cosserat-rod models are geometrically exact models that describe the dynamic response of long and thin deformable rods undergoing external forces and moments \cite{simo1988dynamics, antman2005nonlinear} and have been widely used to model soft manipulators \cite{rucker2011statics, renda2014dynamic, renda2020geometric, till2019real, janabi2021cosserat, grazioso2019geometrically, george2020first, doroudchi2021configuration, chang2020energy, zheng2022pde}.
A Cosserat rod is idealized as a spatial curve and a family of cross-sections (Fig. \ref{fig:Cosserat rod}).
Let $\{e_1,e_2,e_3\}$ be the standard basis of $\mathbb{R}^3$.
Let $s\in[0,\ell]$ be the arc length parameter of the undeformed centerline, where $\ell$ is the undeformed length.
The position of centerline is specified by $p:[0,\ell]\times[0,T]\to\mathbb{R}^3$.
The rotation of each cross-section is specified by $R:[0,\ell]\times[0,T]\to SO(3)$.
The columns $\{b_1,b_2,b_3\}$ of $R$ can be seen as a body-attached basis.
We thus have two  types of coordinate frames: one is the fixed global frame $\{e_1,e_2,e_3\}$, and the other is a family of body-attached local frames $\{b_1,b_2,b_3\}$.
The deformation of the rod is uniquely decomposed into the linear strains $q$ (for shear and elongation) and the angular strains $u$ (for bending and torsion), which are defined in the local frames (Fig. \ref{fig:strains}).

\begin{figure}[t]
    \centering
    \begin{subfigure}[b]{0.8\columnwidth}
        \centering
        \includegraphics[width=\textwidth]{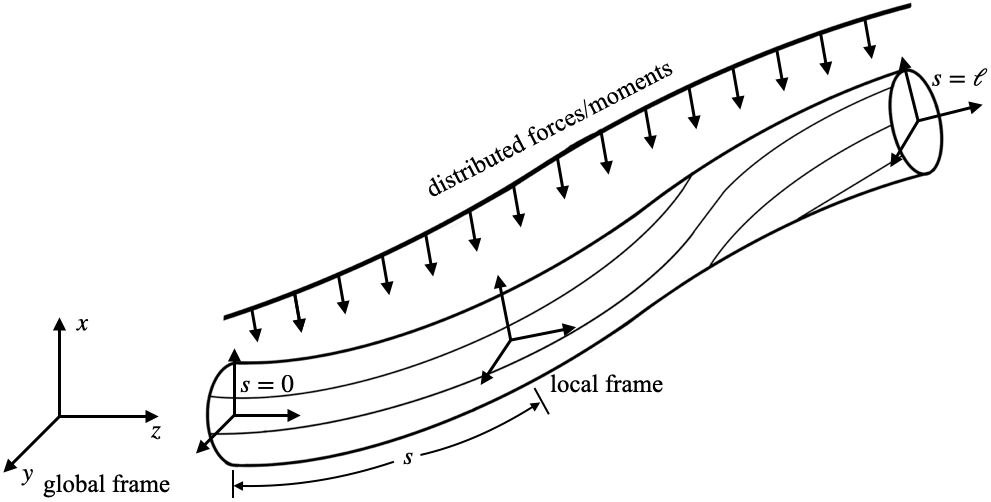}
        \caption{A Cosserat rod.}
        \label{fig:Cosserat rod}
    \end{subfigure}
    
    \begin{subfigure}[b]{0.6\columnwidth}
        \centering
        \includegraphics[width=\textwidth]{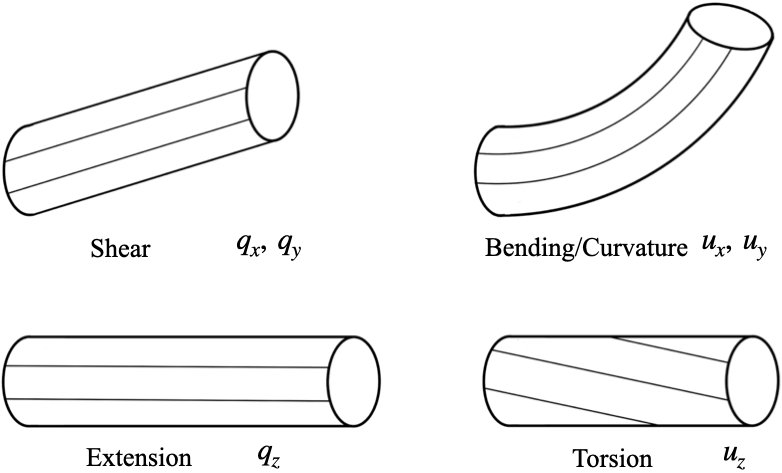}
        \caption{Four types of strains.}
        \label{fig:strains}
    \end{subfigure}
    \caption{Illustration of Cosserat rods.}
\end{figure}

\begin{table}[]
\small
\setlength{\tabcolsep}{4pt}
\renewcommand{\arraystretch}{1.2}
    \centering
    \begin{tabular}{rl}
        $p$ & $:[0,\ell]\times[0,T]\to\mathbb{R}^3$ global position \\
        $R$ & $:[0,\ell]\times[0,T]\to SO(3)$ global rotation matrix \\
        $V,v$ & $:[0,\ell]\times[0,T]\to\mathbb{R}^3$ global/local linear velocities \\
        $w$ & $:[0,\ell]\times[0,T]\to\mathbb{R}^3$ local angular velocity \\
        $q$ & $:[0,\ell]\times[0,T]\to\mathbb{R}^3$ local linear strain \\
        $u$ & $:[0,\ell]\times[0,T]\to\mathbb{R}^3$ local angular strain \\
        $\mathcal{N}$ & $:[0,\ell]\times[0,T]\to\mathbb{R}^3$ global total internal force \\
        $\mathcal{M}$ & $:[0,\ell]\times[0,T]\to\mathbb{R}^3$ global total internal moment \\
        $n$ & $:[0,\ell]\times[0,T]\to\mathbb{R}^3$ local internal elastic force \\
        $m$ & $:[0,\ell]\times[0,T]\to\mathbb{R}^3$ local internal elastic moment \\
        $n_c$ & $:[0,\ell]\times[0,T]\to\mathbb{R}^3$ local internal force input \\
        $m_c$ & $:[0,\ell]\times[0,T]\to\mathbb{R}^3$ local internal moment input \\
        $\mathcal{F}$ & $:[0,\ell]\times[0,T]\to\mathbb{R}^3$ global external force \\
        $\mathcal{L}$ & $:[0,\ell]\times[0,T]\to\mathbb{R}^3$ global external moment \\
        $\rho$ & $\in\mathbb{R}_+$ density \\
        $\sigma$ & $\in\mathbb{R}_+$ cross-sectional area \\
        $J$ & $\in\mathbb{R}^{3\times3}$ rotational inertia matrix \\
        $K_1,K_2$ & $\in\mathbb{R}^{3\times3}$ linear/angular stiffness matrices \\
    \end{tabular}
    \caption{Nomenclature in the 3D case}
    \label{tab:nomenclature}
\end{table}

Using the nomenclature in Table. \ref{tab:nomenclature}, the kinematics and dynamics of a Cosserat rod are characterized 
by the following set of PDEs \cite{simo1988dynamics, antman2005nonlinear}:
\begin{align}
    p_s & =Rq \label{eq:p_s} \\
    R_s & =Ru^\wedge \label{eq:R_s} \\
    p_t & =Rv \label{eq:p_t} \\
    R_t & =Rw^\wedge \label{eq:R_t} \\
    \mathcal{N}_s+\mathcal{F} & =(R\rho\sigma v)_t \label{eq:linear momentum} \\
    \mathcal{M}_s+p_s\times \mathcal{N}+\mathcal{L} & =(R\rho Jw)_t \label{eq:angular momentum}
\end{align}
where $(\cdot)_t:=\frac{\partial}{\partial t}(\cdot)$ and $(\cdot)_s:=\frac{\partial}{\partial s}(\cdot)$ are partial derivatives.
Assuming the rod is initially in a straight configuration on the $z$-axis, the initial condition is given by
\begin{align}\label{eq:IC}
\begin{split}
    & p(s,0)=[0~0~s]^T,\quad p_t(s,0)=0, \\
    & R(s,0)=I,\quad R_t(s,0)=0,
\end{split}
\end{align}
where $I\in\mathbb{R}^{3\times3}$ is the identity matrix.
Assume one end $(s=0)$ is fixed and the other end $(s=\ell)$ is free, the boundary condition is given by
\begin{align}\label{eq:BC 3D}
\begin{split}
    & p(0,t)=0,\quad R(0,t)=I, \\
    & \mathcal{N}(\ell,t)=0, \quad \mathcal{M}(\ell,t)=0.
\end{split}    
\end{align}


Following the nomenclature in \cite{webster2010design}, we refer to $\{p,R\}$ as the task space variables, and $\{q,u\}$ as the configuration space variables.
Equations \eqref{eq:p_s}-\eqref{eq:R_s} are the forward kinematics. Equations \eqref{eq:p_t}-\eqref{eq:R_t} are the kinematic equations.
Equations \eqref{eq:linear momentum}-\eqref{eq:angular momentum} are the dynamic equations for momentum balance.
We assume the control inputs act on the system in the form of internal forces and moments, denoted by $n_c$ and $m_c$ respectively.
This assumption is valid for tendon and fluidic actuators \cite{renda2020geometric}.
Following \cite{renda2020geometric}, the total internal forces and moments are given by
\begin{align}
    \mathcal{N} & =R(n-n_c), \label{eq:total internal force}\\
    \mathcal{M} & =R(m-m_c). \label{eq:total internal moment}
\end{align}
Assuming linear constitutive laws, the internal elastic forces and moments (caused by deformation) are given by
\begin{align}
    n & =K_1(q-\bar{q}) \label{eq:linear constitutive law},\\
    m & =K_2(u-\bar{u}) \label{eq:angular constitutive law},
\end{align}
where $\{\bar{q}(s),\bar{u}(s)\}$ are the undeformed values of $\{q,u\}$, $K_1$ and $K_2$ are positive-definite and diagonal.
In our problem, $\mathcal{F}=\rho\sigma ge_3$ and $\mathcal{L}=0$ where $g$ is the gravity.

Note that there are only four independent state variables (or twelve if recalling that each variable has three elements).
To obtain a minimum representation of the complete system, we choose $\{p,R,V,w\}$ as the state variables.
By substituting \eqref{eq:p_s}-\eqref{eq:R_s} and \eqref{eq:total internal force}-\eqref{eq:angular constitutive law} into \eqref{eq:linear momentum}-\eqref{eq:angular momentum} and using the fact that $V=Rv$, we obtain the following task space representation:
\begin{align}\label{eq:task representation 3D}
\begin{split}
    p_t & =V \\
    \rho\sigma V_t & =\big(R(n-n_c)\big)_s+\rho\sigma ge_3 \\
    R_t & =Rw^\wedge \\
    \rho Jw_t & =R^T\big(R(m-m_c)\big)_s+R^Tp_s^\wedge R(n-n_c) \\
    & \quad -w^\wedge\rho Jw,
\end{split}
\end{align}
where, according to \eqref{eq:p_s}-\eqref{eq:R_s} and \eqref{eq:linear constitutive law}-\eqref{eq:angular constitutive law},
\begin{align*}
    n & =K_1(R^Tp_s-\bar{q}) \\
    m & =K_2((R^TR_s)^\vee-\bar{u}).
\end{align*}
This is a system of two nonlinear hyperbolic equations, one for the translational motions $p$ and the other for the rotational motions $R$.

\begin{remark}
Relevant results concerning the well-posedness of \eqref{eq:task representation 3D} can be found in \cite{rodriguez2020boundary}.
With mild conditions, one can show that $H^1$ solutions of $\{p,R\}$ exist globally, which is essentially due to the energy conservation property of Cosserat rods.
One can also obtain the semi-global existence of $C^2$ solutions with extra smallness assumptions on the initial conditions \cite{li2001semi}.
The complete treatment of the well-posedness issue is beyond the scope of this work and is left for future research.
\end{remark}

In practice, system \eqref{eq:task representation 3D} is under-actuated for two reasons.
First, the inputs $\{n_c,m_c\}$ are the internal forces and moments generated by actuators, which are always finite-dimensional because there are only finitely many independent actuators.
Second, $m_c$ is usually a function of $n_c$, i.e., we cannot control the translational and rotational motions independently.
We use the models for fluidic actuators derived in \cite{till2019real} as an example.
Consider a soft manipulator with multiple hollow actuation chambers offset from the centerline.
Let $\alpha$ be the number of chambers.
Let $r_i(s)$ be the vector from the cross-section center of mass to the center of the $i$-th chamber.
Assume a uniform air pressure $P_i(t)$ in each chamber.
Let $A_i$ be the cross-sectional area of the chamber.
Then the local internal force and moment generated by the $i$-th chamber are
\begin{align*}
    n_i^c & =P_iA_ie_3, \\
    m_i^c & =r_i^\wedge n_i^c=P_iA_ir_i^\wedge e_3.
\end{align*}
The total local internal force and moment are given by
\begin{align}
    n_c & =\sum_{i=1}^\alpha P_iA_ie_3, \\
    m_c & =\sum_{i=1}^\alpha P_iA_ir_i^\wedge e_3. \label{eq:actuator model}
\end{align}
We see that $n_c$ and $m_c$ are essentially simultaneously determined by an $\alpha$-dimensional air pressure vector $P=[P_1~\dots~P_\alpha]^T$.

To simplify the design, we directly treat $m_c$ as the control input.
In the implementation, one needs to find values of $\{P_i\}$ to approximate the $m_c$ that we design.
We will impose $n_c=0$ to simplify the stability proof.
This is not a restriction because as long as the actuators are paired symmetrically on two sides of the manipulator, we can let their forces counteract each other to generate only moments.
For example, let $r_i=-r_j$ (symmetric placement) and $P_i=-P_j$.

In this work, we consider the planar case where the manipulator only moves on the $yz$-plane, assuming the gravity also lies on this plane.
In this case, one element of every translational variable and two elements of every rotational variable are constantly zero.
The corresponding equations can be removed from the complete system \eqref{eq:task representation 3D}.
We will still use the same notations for the reduced vectors and matrices, e.g., $p=[p_2~p_3]^T$ and $w=w_1$ where $(\cdot)_i,i=1,2,3$ is the $i$-th element of the original vector.
Assuming $n_c=0$, the task space representation in the planar case is given by:
\begin{align}\label{eq:task representation 2D}
\begin{split}
    p_{tt} & =(RK_3R^Tp_s-RK_3\bar{q})_s+ge_3, \\
    \theta_{tt} & =K_4\theta_{ss}-\frac{1}{\rho J}(m_c)_s+\hat{p}_sRK_5(R^Tp_s-\bar{q}),
\end{split}
\end{align}
where $K_3=K_1/(\rho\sigma)$, $K_4=K_2/(\rho J)$, $K_5=K_1/(\rho J)$, $\hat{p}=[-p_3~p_2]$, $\theta:[0,\ell]\times[0,T]\to\mathbb{R}$ is the rotation angle about the $x$-axis, and
\begin{align*}
    R=\begin{bmatrix}
        \cos\theta & -\sin\theta \\
        \sin\theta & \cos\theta
    \end{bmatrix}.
\end{align*}
The boundary conditions are correspondingly simplified as:
\begin{align}\label{eq:BC}
\begin{split}
    & p(0,t)=0,\quad \theta(0,t)=0, \\
    & (R^Tp_s)(\ell,t)-\bar{q}=0, \quad (K_2\theta_s-m_c)(\ell,t)=0.
\end{split}    
\end{align}

Assume the objective is to track a position trajectory $p^*(s,t)\in\mathbb{R}^2$ in the task space (alternatively written as $p_*$ when the superscript position is needed for other notations) which is as smooth as needed with uniformly bounded derivatives and satisfies the following boundary conditions:
\begin{align} \label{eq:desired trajectory}
    p^*(0,t)=0,\quad p^*_s(\ell,t)=p_s(\ell,t).
\end{align}
The condition at $s=\ell$ is a mild condition to simplify the stability analysis.
In practice, for a given desired trajectory $p_*(s,t)$, it is easy to regulate its value in a small neighborhood of the free end $s=\ell$ such that the regulated trajectory is almost the same as the original $p_*$ while satisfying \eqref{eq:desired trajectory}.

The control problem is stated below.

\begin{problem}
Consider \eqref{eq:task representation 2D}.
Design $m_c$ such that $p(\cdot,t)\to p^*(\cdot,t)$ as $t\to\infty$.
\end{problem}

\section{Inner-outer Loop Control Design}
\label{section:control}
In this section, we design an infinite-dimensional state feedback controller for the soft manipulator to track a desired position trajectory in the task space.

The control design relies on recognizing that the complete system \eqref{eq:task representation 2D} has a lower-triangular structure in the sense that the rotational motion $R$ can be viewed as a virtual input of the translational equation. 
Therefore, we adopt an inner-outer loop design philosophy. 
In the outer loop, we design desired rotational motions to rotate the translational motions in a direction that asymptotically achieves position tracking.
In the inner loop, we design inputs $m_c$ for the rotational equation to track their desired motions.
Both designs are inspired by the energy decay property of damped wave equations.
We will assume the states $\{p,R\}$ are available.
It is relatively easy to obtain estimates of $p$ using cameras.
One can then estimate $R$ using the extended Kalman filter for Cosserat-rod models reported in \cite{zheng2022pde}.

\noindent\textbf{Outer loop.} Define the following translational error term:
\begin{align*}
\tilde{p}=p-p^*.
\end{align*}
By the first equation of \eqref{eq:task representation 2D}, we obtain that $\tilde{p}$ satisfies:
\begin{align}\label{eq:linear closed loop}
    \tilde{p}_{tt} & =(RK_3R^Tp_s-RK_3\bar{q})_s+ge_3-p_{tt}^*.
\end{align}
We view $R$ as a virtual input to this system and use it to reshape the translational dynamics such that $\tilde{p}$ satisfies the following damped wave equation:
\begin{align*}
    \tilde{p}_{tt}=(K_q\tilde{p}_s)_s-K_v\tilde{p}_t-K_p\tilde{p},
\end{align*}
which is known to converge exponentially under suitable assumptions on the coefficients $\{K_q,K_v,K_p\}$ and the boundary condition.
This inspires us to design the desired rotational motion $R^*(s,t)$ (alternatively $R_*$), in the form of
\begin{align*}
    R^*=\begin{bmatrix}
        \cos\theta^* & -\sin\theta^* \\
        \sin\theta^* & \cos\theta^*
    \end{bmatrix},
\end{align*}
such that at every $t$, the following ODE holds
\begin{align}\label{eq:desired rotation differential form}
\begin{split}
    & \quad (R^*K_3R_*^Tp_s-R^*K_3\bar{q})_s+ge_3-p_{tt}^* \\
    & =(K_q\tilde{p}_s)_s-K_v\tilde{p}_t-K_p\tilde{p},
\end{split}
\end{align}
for some positive-definite matrix-valued functions $K_q(s),K_v(s,t),K_p(s)\in\mathbb{R}^{2\times2}$, with the boundary conditions $\theta^*(0,t)=0$ and $\theta_s^*(\ell,t)=\theta_s(\ell,t)$ to ensure that the desired rotational trajectory is trackable.
It is important that $K_v$ can be a function of $t$ which ensures that \eqref{eq:desired rotation differential form} has a solution.
We should point out that if $K_v$ is prescribed, \eqref{eq:desired rotation differential form} may not admit a solution for $R^*$.
The correct procedure is to treat both $R^*(s,t)$ and $K_v(s,t)$ as independent variables and solve for them simultaneously at every $t$ with the constraints that $R^*$ is a rotation matrix and $K_v>0$.

\noindent\textbf{Inner loop.} Define the following rotational error term:
\begin{align*}
    \tilde{\theta} = \theta-\theta^*.
\end{align*}
By the second equation of \eqref{eq:task representation 2D}, $\tilde{\theta}$ satisfies:
\begin{align}\label{eq:angular closed loop}
    \tilde{\theta}_{tt}=K_4\theta_{ss}-\frac{1}{\rho J}(m_c)_s+\hat{p}_sRK_5(R^Tp_s-\bar{q})-\theta_{tt}^*.
\end{align}
We can use the same idea to reshape the rotational dynamics into a damped wave equation.
This motivates us to design the input $m_c$ such that
\begin{align}\label{eq:m_c differential form}
\begin{split}
    & \quad K_4\theta_{ss}-\frac{1}{\rho J}(m_c)_s+\hat{p}_sRK_5(R^Tp_s-\bar{q})-\theta_{tt}^* \\
    & =(k_u\tilde{\theta}_s)_s-k_w\tilde{\theta}_t-k_\theta\tilde{\theta},
\end{split}
\end{align}
for some functions $k_u(s),k_w(s),k_\theta(s)>0$ with the boundary condition $m_c(\ell,t)=K_2\theta_s(\ell,t)$ according to \eqref{eq:BC}.
Equivalently, at every $t$, after substituting the boundary condition, $m_c$ can be computed by:
\begin{align}\label{eq:m_c}
\begin{split}
    m_c(s,t) & =\rho J[K_4\theta_s-k_u\tilde{\theta}_s](s,t) \\
    & \quad +\rho J\int_s^\ell\big[-k_w\tilde{\theta}_t-k_\theta\tilde{\theta} \\
    & \quad -\hat{p}_sRK_5(R^Tp_s-\bar{q})+\theta_{tt}^*\big](\tau,t)d\tau.
\end{split}
\end{align}

We can prove that the closed-loop system under such an inner-outer loop control is exponentially stable.

\begin{theorem}\label{thm:stability}
Consider \eqref{eq:task representation 2D}.
Let $R^*$ be computed by \eqref{eq:desired rotation differential form} and $m_c$ be given by \eqref{eq:m_c}.
Let the smallest eigenvalue of $K_q$ be sufficiently large such that $(K_q+RK_3R^T-R^*K_3R_*^T)$ is positive-definite for all $s$ at $t=0$.
Then as $t\to\infty$, the following convergence holds exponentially,
\begin{align*}
    & \big(\|\tilde{\theta}(\cdot,t)\|_{L^\infty},\|\tilde{\theta}_t(\cdot,t)\|_{L^\infty},\|\tilde{\theta}_s(\cdot,t)\|_{L^2}\big)\to0, \\
    & \big(\|\tilde{p}(\cdot,t)\|_{L^2},\|\tilde{p}_t(\cdot,t)\|_{L^2},\|\tilde{p}_s(\cdot,t)\|_{L^2}\big)\to0.
\end{align*}
\end{theorem}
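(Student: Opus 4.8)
The plan is to exploit the cascade structure created by the design. Substituting \eqref{eq:m_c} into \eqref{eq:angular closed loop} makes the rotational error satisfy the \emph{autonomous} damped wave equation
\begin{align}\label{eq:theta_tilde_clean}
    \tilde\theta_{tt}=(k_u\tilde\theta_s)_s-k_w\tilde\theta_t-k_\theta\tilde\theta ,
\end{align}
which contains no $\tilde p$, while the translational error \eqref{eq:linear closed loop} is forced by $\tilde\theta$ only through $R-R^*$. I would therefore first prove exponential decay of $\tilde\theta$ (the inner loop) and then propagate it to $\tilde p$ (the outer loop), using in each case the energy method for damped wave equations on $[0,\ell]$.

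\textbf{Inner loop.} The boundary conditions $\theta^*(0,t)=0$ and $\theta^*_s(\ell,t)=\theta_s(\ell,t)$ give $\tilde\theta(0,t)=0$ and $\tilde\theta_s(\ell,t)=0$, i.e.\ Dirichlet at $s=0$ and Neumann at $s=\ell$. Define $E_0=\tfrac12\int_0^\ell(\tilde\theta_t^2+k_u\tilde\theta_s^2+k_\theta\tilde\theta^2)\,ds$. Using \eqref{eq:theta_tilde_clean} and integrating by parts, the boundary contribution $[k_u\tilde\theta_s\tilde\theta_t]_0^\ell$ vanishes and $\dot E_0=-\int_0^\ell k_w\tilde\theta_t^2\,ds\le0$; adding the cross term $\varepsilon\int_0^\ell\tilde\theta\tilde\theta_t\,ds$ for small $\varepsilon>0$ (and using that $k_\theta>0$ plus the Poincar\'e-type inequality $\|f\|_{L^\infty}\le\sqrt{\ell}\,\|f_s\|_{L^2}$ valid since $\tilde\theta(0,\cdot)=0$) turns this into $\dot{\mathcal E}_0\le-\mu_0\mathcal E_0$ for an equivalent functional $\mathcal E_0\sim E_0$, so $E_0(t)\le Ce^{-\mu_0 t}$. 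Since $k_u,k_w,k_\theta$ depend only on $s$, $\tilde\theta_t$ solves the same equation \eqref{eq:theta_tilde_clean} with the same homogeneous boundary data, so the analogous energy built from $(\tilde\theta_{tt},\tilde\theta_{ts},\tilde\theta_t)$ also decays like $e^{-\mu_0 t}$. The one-dimensional estimate above then yields the exponential decay of $\|\tilde\theta\|_{L^\infty}$, $\|\tilde\theta_t\|_{L^\infty}$ and $\|\tilde\theta_s\|_{L^2}$ claimed in the theorem.

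\textbf{Outer loop.} Subtracting \eqref{eq:desired rotation differential form} from \eqref{eq:linear closed loop} and writing $p_s=\tilde p_s+p^*_s$ inside $(RK_3R^T-R^*K_3R_*^T)p_s$, the translational error satisfies
\begin{align}\label{eq:p_tilde_perturbed}
    \tilde p_{tt}=(\hat K_q\tilde p_s)_s-K_v\tilde p_t-K_p\tilde p+\hat g ,
\end{align}
with $\hat K_q:=K_q+RK_3R^T-R^*K_3R_*^T$ and $\hat g:=\big((RK_3R^T-R^*K_3R_*^T)p^*_s-(R-R^*)K_3\bar q\big)_s$, together with $\tilde p(0,t)=0$ and $\tilde p_s(\ell,t)=0$ from \eqref{eq:BC}--\eqref{eq:desired trajectory}. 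Since $R,R^*$ are planar rotations by $\theta,\theta^*$, the quantities $\|R-R^*\|$ and $\|RK_3R^T-R^*K_3R_*^T\|$ are $O(|\tilde\theta|)$ and their $s$-derivatives are $O(|\tilde\theta|+|\tilde\theta_s|)$; assuming the a priori bounds on $\theta_s,\theta_t$ that accompany the well-posedness of \eqref{eq:task representation 2D} (cf.\ the remark after \eqref{eq:task representation 3D}) and the boundedness of $p^*$ and its derivatives, this gives $\|\hat g(\cdot,t)\|_{L^2}\le C\sqrt{E_0(t)}$ and $\|\partial_t\hat K_q(\cdot,t)\|_{L^\infty}\le C(\|\tilde\theta_t\|_{L^\infty}+\|\tilde\theta\|_{L^\infty})$, both exponentially small. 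The hypothesis on $K_q$ makes $\hat K_q>0$ at $t=0$, and a continuation argument together with the exponential smallness of $\tilde\theta$ keeps $\hat K_q\ge cI$ for all $t$. With $E_1=\tfrac12\int_0^\ell(|\tilde p_t|^2+\tilde p_s^T\hat K_q\tilde p_s+\tilde p^TK_p\tilde p)\,ds$ plus the cross term $\varepsilon\int_0^\ell\tilde p^T\tilde p_t\,ds$, the boundary terms cancel as before and one reaches a differential inequality $\dot{\mathcal E}_1\le-c_1\mathcal E_1+C\|\partial_t\hat K_q\|_{L^\infty}\mathcal E_1+C\|\hat g\|_{L^2}^2$; since the last two terms are $O(e^{-\mu t})$, a Gr\"onwall-type comparison gives $\mathcal E_1(t)\le Ce^{-\mu_1 t}$, hence the claimed exponential decay of $\|\tilde p\|_{L^2},\|\tilde p_t\|_{L^2},\|\tilde p_s\|_{L^2}$.

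\textbf{Main obstacle.} The two energy estimates are essentially routine once the structure \eqref{eq:theta_tilde_clean}--\eqref{eq:p_tilde_perturbed} is in place. The genuinely delicate points are: (i) bounding the coupling data $\hat g$ and $\partial_t\hat K_q$ by the inner-loop energy, which forces a priori regularity/boundedness of the closed-loop solution ($\theta_s,\theta_t$ bounded, $K_v\ge cI$, and the designed $\theta^*,\theta^*_t,\theta^*_{tt}$ well-defined and bounded) --- precisely the well-posedness questions the paper defers; and (ii) the continuation argument propagating the positivity of $\hat K_q=K_q+RK_3R^T-R^*K_3R_*^T$ from $t=0$ to all times, which is what ties the admissible size of $\lambda_{\min}(K_q)$ to the size of the initial rotational error $\tilde\theta(\cdot,0)=-\theta^*(\cdot,0)$.
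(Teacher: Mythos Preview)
Your proposal is correct and follows essentially the same route as the paper: the same cascade decomposition into an autonomous damped wave equation for $\tilde\theta$ and a perturbed one for $\tilde p$, the same energy-plus-cross-term Lyapunov functionals, the same trick of differentiating \eqref{eq:theta_tilde_clean} in $t$ to upgrade $\tilde\theta_t$ to $L^\infty$, and the same ISS/Gr\"onwall argument for the outer loop driven by the exponentially decaying $\Phi=RK_3R^T-R^*K_3R_*^T$ and $\Psi=\hat g$. Your identification of the delicate points (well-posedness/boundedness assumptions and the continuation of positivity of $\hat K_q$) matches what the paper either assumes or glosses over.
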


\begin{proof}
The proof mainly consists of two arguments.
In the inner loop, $\{\tilde{\theta},\tilde{\theta}_t,\tilde{\theta}_s\}$ converge exponentially under \eqref{eq:m_c}.
In the outer loop, $\{\tilde{p},\tilde{p}_t,\tilde{p}_s\}$ become input-to-state stable \cite{dashkovskiy2013input} after at most a finite time (once $\{\tilde{\theta},\tilde{\theta}_t\}$ become small) and eventually converges exponentially.

(1) Inner loop.
We prove $(\|\tilde{\theta}\|_{L^\infty},\|\tilde{\theta}_t\|_{L^\infty},\|\tilde{\theta}_s\|_{L^2})\to0$.
Substituting \eqref{eq:m_c differential form} into \eqref{eq:angular closed loop}, we obtain
\begin{align}\label{eq:angular error system}
    \tilde{\theta}_{tt}=(k_u\tilde{\theta}_s)_s-k_w\tilde{\theta}_t-k_\theta\tilde{\theta},
\end{align}
with boundary conditions $\tilde{\theta}(0,t)=0$ and $\tilde{\theta}_s(\ell,t)=0$.
Consider a Lyapunov functional
\begin{align*}
    \mathcal{V}_1(t) & =\frac{1}{2}\int_0^\ell k_u\tilde{\theta}_s^2 + \tilde{\theta}_t^2 + 2c\tilde{\theta}\tilde{\theta}_t + k_\theta\tilde{\theta}^2 ds \\
    & =\frac{1}{2}\int_0^\ell k_u\tilde{\theta}_s^2 + \begin{bmatrix}
        \tilde{\theta}_t \\
        \tilde{\theta}
    \end{bmatrix}^T
    \begin{bmatrix}
        1 & c \\
        c & k_\theta
    \end{bmatrix}
    \begin{bmatrix}
        \tilde{\theta}_t \\
        \tilde{\theta}
    \end{bmatrix}
    ds,
\end{align*}
where $c>0$ is a constant to be determined later.
We have
\begin{align*}
    \frac{d}{dt}\mathcal{V}_1 & =\int_0^\ell k_u\tilde{\theta}_s\tilde{\theta}_{st} + \tilde{\theta}_t\tilde{\theta}_{tt} + c\tilde{\theta}\tilde{\theta}_{tt} + c\tilde{\theta}_t^2 + 
    k_\theta\tilde{\theta}\tilde{\theta}_t ds \\
    & =\int_0^\ell k_u\tilde{\theta}_s\tilde{\theta}_{st} + (\tilde{\theta}_t+c\tilde{\theta})[(k_u\tilde{\theta}_s)_s-k_w\tilde{\theta}_t-k_\theta\tilde{\theta}] \\
    & \quad+c\tilde{\theta}_t^2 + 
    k_\theta\tilde{\theta}\tilde{\theta}_t ds.
\end{align*}
Using integration by parts and the boundary condition,
\begin{align*} 
    \int_0^\ell (\tilde{\theta}_t+c\tilde{\theta})(k_u\tilde{\theta}_s)_s ds=\int_0^\ell -k_u(\tilde{\theta}_{ts}+c\tilde{\theta}_s)\tilde{\theta}_s ds.
\end{align*}
Then,
\begin{align*}
    \frac{d}{dt}\mathcal{V}_1 & =\int_0^\ell -ck_u\tilde{\theta}_s^2 -(k_w-c)\tilde{\theta}_t^2 - 
    ck_w\tilde{\theta}\tilde{\theta}_t - ck_\theta\tilde{\theta}^2 ds \\
    & =-\int_0^\ell ck_u\tilde{\theta}_s^2 + \begin{bmatrix}
        \tilde{\theta}_t \\
        \tilde{\theta}
    \end{bmatrix}^T
    \begin{bmatrix}
        k_w-c & \frac{ck_w}{2} \\
        \frac{ck_w}{2} & ck_\theta
    \end{bmatrix}
    \begin{bmatrix}
        \tilde{\theta}_t \\
        \tilde{\theta}
    \end{bmatrix}
    ds.
\end{align*}

We can choose $c$ such that
\begin{align}\label{eq:c condition}
    0<c<\inf_s\Big\{\sqrt{k_\theta(s)},\frac{k_\theta k_w}{k_\theta+k_w^2/4}(s)\Big\}.
\end{align}
Then $\mathcal{V}_1$ is positive-definite and $\frac{d}{dt}\mathcal{V}_1$ is negative-definite.
We obtain that $(\|\tilde{\theta}(\cdot,t)\|_{L^2},\|\tilde{\theta}_t(\cdot,t)\|_{L^2},\|\tilde{\theta}_s(\cdot,t)\|_{L^2})\to0$ exponentially.
Note that $(\|\tilde{\theta}(\cdot,t)\|_{L^2},\|\tilde{\theta}_s(\cdot,t)\|_{L^2})\to0$ implies $\|\tilde{\theta}(\cdot,t)\|_{L^\infty}\to0$.
Next, define $\eta=\tilde{\theta}_t$.
Since all the coefficients in \eqref{eq:angular error system} are independent of $t$, one can take the time derivative on both sides and find that $\eta$ satisfies the same equation as $\tilde{\theta}$.
By the same argument, we can prove that $(\|\eta(\cdot,t)\|_{L^2},\|\eta_t(\cdot,t)\|_{L^2},\|\eta_s(\cdot,t)\|_{L^2})\to0$ exponentially and hence $\|\tilde{\theta}_t(\cdot,t)\|_{L^\infty}=\|\eta(\cdot,t)\|_{L^\infty}\to0$ exponentially.

(2) Outer loop.
We prove $(\|\tilde{p}\|_{L^2},\|\tilde{p}_t\|_{L^2},\|\tilde{p}_s\|_{L^2})\to0$.
Substituting \eqref{eq:desired rotation differential form} into \eqref{eq:linear closed loop}, we obtain
\begin{align*}
    \tilde{p}_{tt} & =\big((K_q+\Phi)\tilde{p}_s\big)_s-K_v\tilde{p}_t-K_p\tilde{p}+\Psi,
\end{align*}
where
\begin{align*}
    \Phi & =RK_3R^T-R^*K_3R_*^T,\\
    \Psi & =(\Phi p_s^*)_s+(R^*-R)_sK_3\bar{q}.
\end{align*}
According to \eqref{eq:desired trajectory}, the boundary condition is given by $\tilde{p}(0,t)=0$ and $\tilde{p}_s(\ell,t)=0$.
Consider a Lyapunov functional
\begin{align*}
    \mathcal{V}_2(t)=\frac{1}{2}\int_0^\ell \tilde{p}_s^T(K_q+\Phi)\tilde{p}_s + \tilde{p}_t^T\tilde{p}_t + 2\tilde{p}^TC\tilde{p}_t + \tilde{p}^TK_p\tilde{p} ds,
\end{align*}
where $C>0$ is a positive-definite constant matrix such that
\begin{align*}
    K_p^{\frac{1}{2}}-C \text{ and } K_v-C-\frac{K_vCK_p^{-1}K_v}{4}
\end{align*}
are both positive-definite for all $s$, which is essentially the matrix version of condition \eqref{eq:c condition}.
(The existence of such an $C$ can be easily verified by assuming $C=c_1I$ for some constant $c_1>0$.)
By assumption, $\mathcal{V}_2$ is positive-definite at $t=0$.
Note that $\|\tilde{\theta}(\cdot,t)\|_{L^\infty}\to0$ exponentially implies that $\|\Phi(\cdot,t)\|_{L^\infty}\to0$ exponentially.
Hence, $\mathcal{V}_2$ is positive-definite for all $t\geq0$.
Next, by taking the time derivative and using integration by parts in a similar way as for $\frac{d}{dt}\mathcal{V}_1$, we obtain
\begin{align*}
    \frac{d}{dt}\mathcal{V}_2 & =\int_0^\ell \tilde{p}_s^T(K_q+\Phi)\tilde{p}_{st} + \frac{1}{2}\tilde{p}_s^T\Phi_t\tilde{p}_s + \tilde{p}_t^T\tilde{p}_{tt} \\
    & \quad + \tilde{p}^TC\tilde{p}_{tt} + \tilde{p}_t^TC\tilde{p}_t + \tilde{p}^TK_p\tilde{p}_t ds \\
    & =\int_0^\ell -\tilde{p}_s^T(CK_q+C\Phi-\frac{1}{2}\Phi_t)\tilde{p}_s + (\tilde{p}_t+C\tilde{p})^T\Psi \\
    & \quad -\begin{bmatrix}
        \tilde{p}_t \\
        \tilde{p}
    \end{bmatrix}^T
    \begin{bmatrix}
        K_v-C & \frac{1}{2}K_vC \\
        \frac{1}{2}CK_v & CK_p
    \end{bmatrix}
    \begin{bmatrix}
        \tilde{p}_t \\
        \tilde{p}
    \end{bmatrix} ds,
\end{align*}
where the last term is negative-definite by our selection of $C$.
Note that $(CK_q+C\Phi-\frac{1}{2}\Phi_t)$ is not necessarily positive-definite.
However, $(\|\tilde{\theta}(\cdot,t)\|_{L^\infty},\|\tilde{\theta}_t(\cdot,t)\|_{L^\infty})\to0$ exponentially implies that $\|\Phi_t(\cdot,t)\|_{L^\infty}\to0$ exponentially.
This means that there exists a finite time $T>0$ such that when $t>T$, $(CK_q+C\Phi-\frac{1}{2}\Phi_t)$ becomes positive-definite for all $s$ and $\mathcal{V}_2$ becomes input-to-state stable \cite{dashkovskiy2013input} with respect to $\|\Psi(\cdot,t)\|_{L^2}$.
Since $(\|\tilde{\theta}(\cdot,t)\|_{L^\infty},\|\tilde{\theta}_s(\cdot,t)\|_{L^2})\to0$ exponentially implies that $\|\Psi(\cdot,t)\|_{L^2}\to0$ exponentially, we conclude that $(\|\tilde{p}(\cdot,t)\|_{L^2},\|\tilde{p}_t(\cdot,t)\|_{L^2},\|\tilde{p}_s(\cdot,t)\|_{L^2})\to0$ exponentially.
\end{proof}

\begin{remark}
Our control design takes place in the task space so we completely avoid the inverse kinematics problem encountered if using PCC models \cite{webster2010design}. 
The control algorithm in either loop may be replaced by alternative algorithms as long as they have suitable stabilizing properties.
For example, the inner loop is exponentially stable and the outer loop is input-to-state stable (at least after a finite time).
\end{remark}

The control input given by \eqref{eq:m_c} is infinite-dimensional.
In the implementation, it needs to be approximated by a finite number of actuators.
Take fluidic actuators as an example. 
Then one needs to find values of the air pressure $\{P_i\}_{i=1}^\alpha$ such that the generated actuation $m_c$ (finite-dimensional) according to \eqref{eq:actuator model} approximates the designed $m_c$ (infinite-dimensional) in \eqref{eq:m_c}.
Intuitively, we can achieve better approximations with more actuators and suitable placements. 
On the other hand, if a position trajectory is not achievable based on the current actuator allocation, then it is impossible to accurately approximate the designed $m_c$.
This allocation problem highly depends on the actuators and is under study.

\section{Simulation study}
\label{section:simulation}
\begin{figure*}[t]
    \centering
    \begin{subfigure}[b]{0.22\textwidth}
        \centering
        \includegraphics[width=\textwidth]{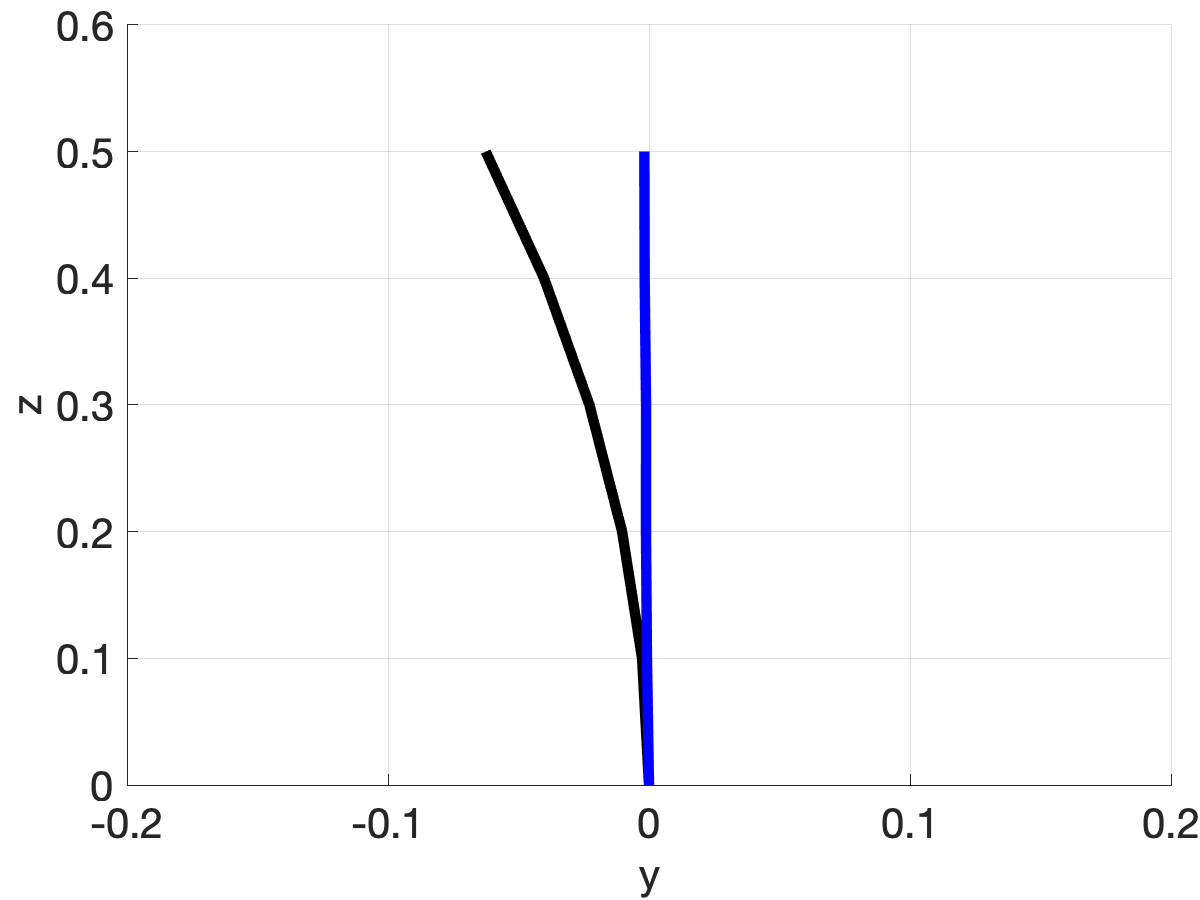}
    \end{subfigure}
    \begin{subfigure}[b]{0.22\textwidth}
        \centering
        \includegraphics[width=\textwidth]{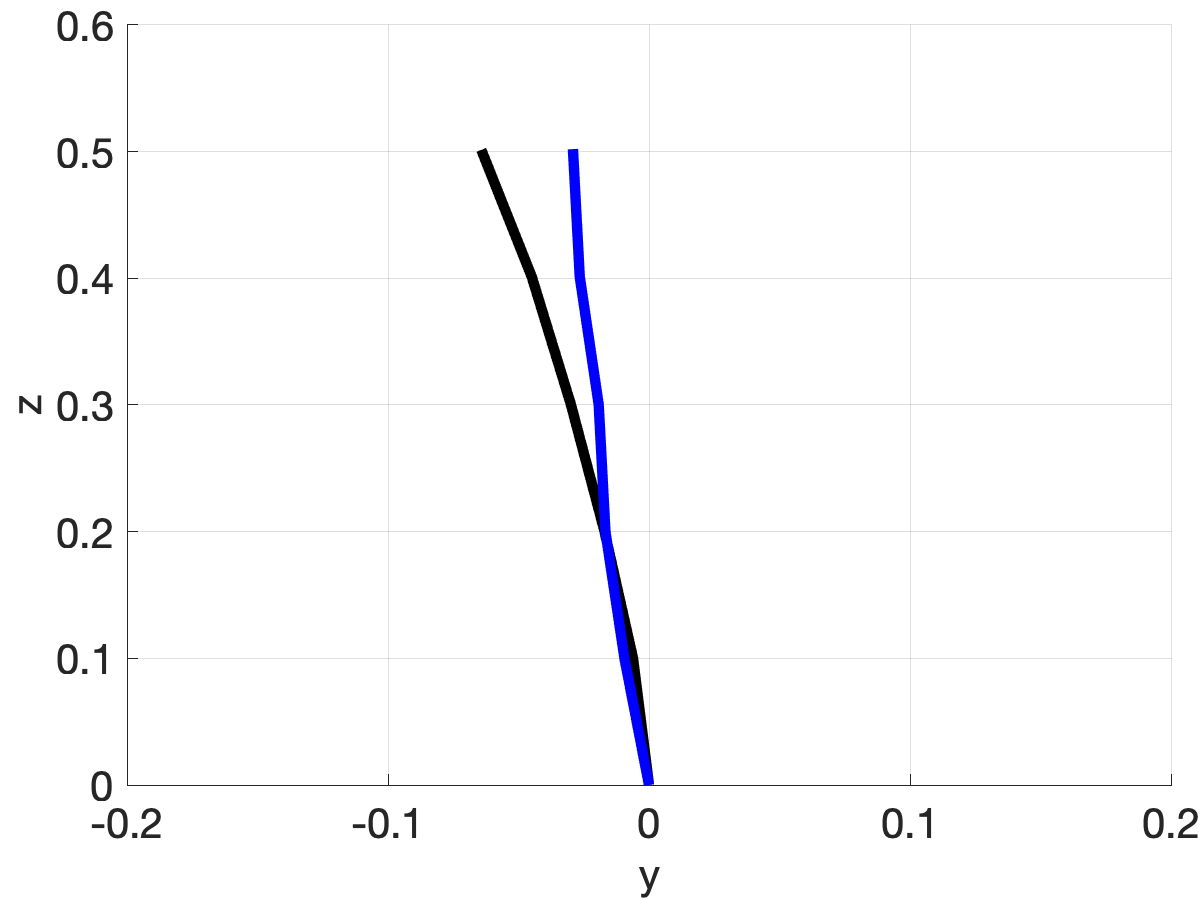}
    \end{subfigure}
    \begin{subfigure}[b]{0.22\textwidth}
        \centering
        \includegraphics[width=\textwidth]{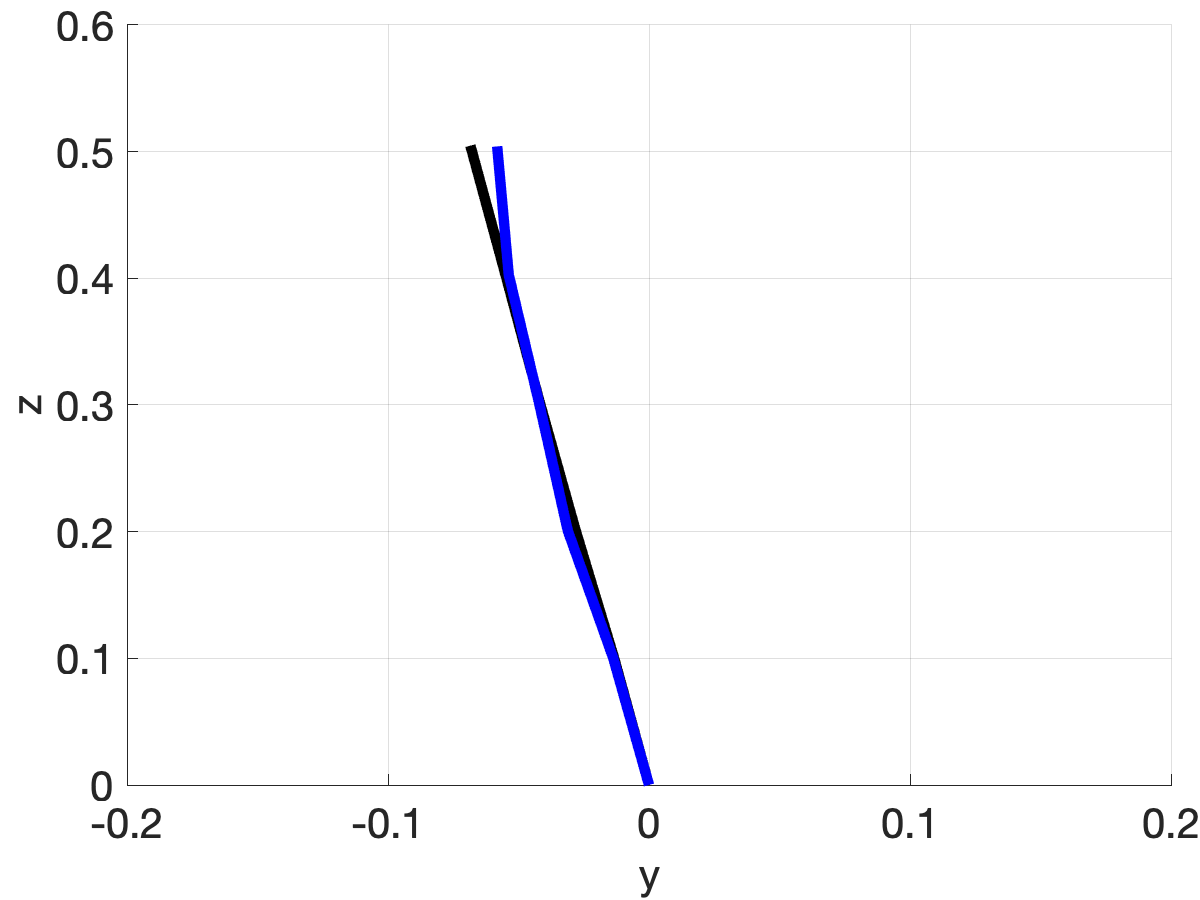}
    \end{subfigure}
    \begin{subfigure}[b]{0.22\textwidth}
        \centering
        \includegraphics[width=\textwidth]{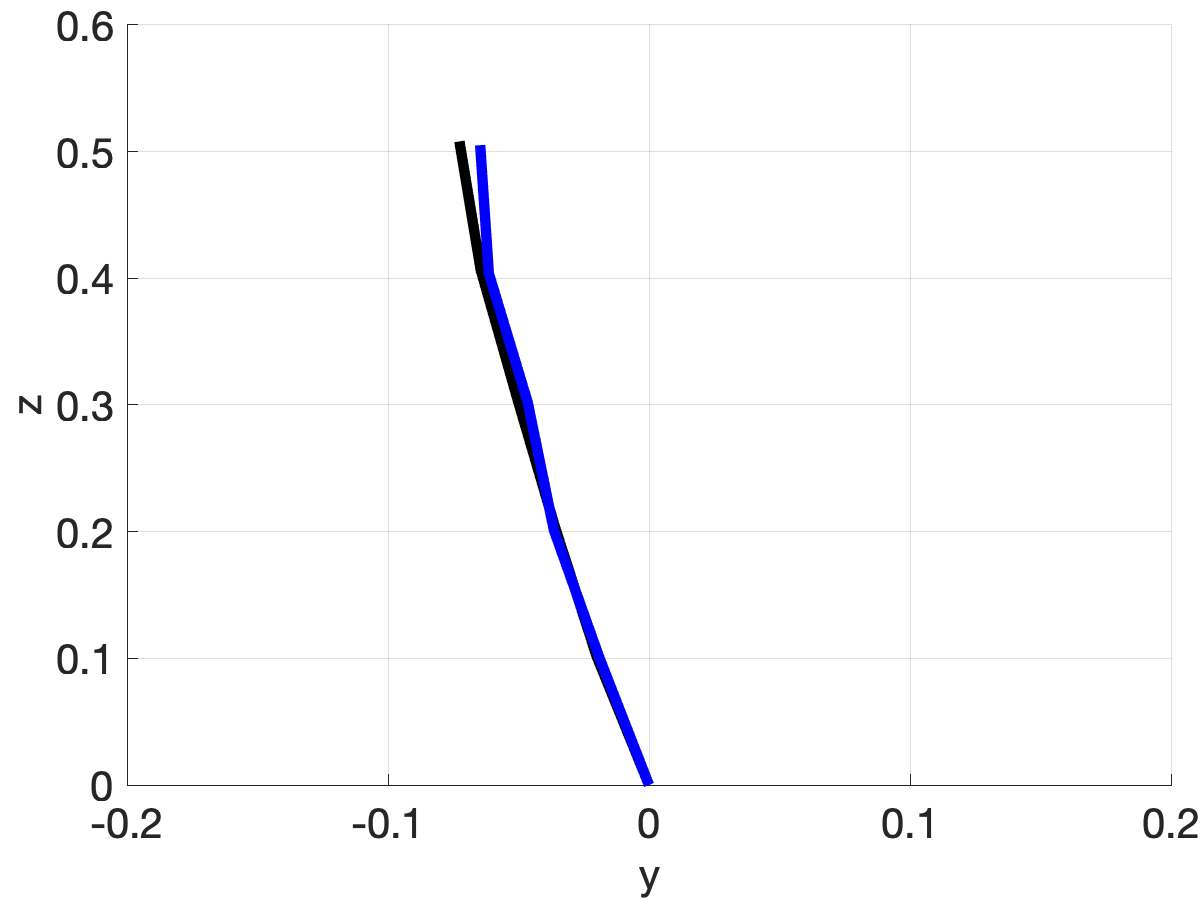}
    \end{subfigure}
    \caption{The position tracking process (from left to right). Black: desired position. Blue: actual position.}
    \label{fig:tracking}
\end{figure*}

A simulation study is performed on MATLAB to verify the proposed control algorithms.

\textit{Setup.} The system parameters for \eqref{eq:task representation 2D} are chosen as $\ell=0.5$, $K_3=\operatorname{diag}[1,1.5]$, $K_4=1$, and $K_5=1.5$.
The system \eqref{eq:task representation 2D} is simulated using finite differences where we set $ds=0.05$ and $dt=0.005$.
In other words, we use $N=11$ points to represent a rod.
(Note that these system parameters are chosen to be small values to avoid numerical instability in the finite difference method and may not reflect a real soft manipulator.
A more delicate simulator for soft manipulators is under development.)
The manipulator is initially undeformed and lies on the $z$-axis.

\textit{Algorithm implementation.} In the outer loop, we set the control gains to be $K_q=1$ and $K_p=4$.
To solve for the desired rotation matrix $R^*$ using \eqref{eq:desired rotation differential form}, we notice that at every $t$, \eqref{eq:desired rotation differential form} can be written as an ODE of $s$ in the following form:
\begin{align*}
    \frac{d}{ds}(R^*K_3R_*^Ta-R^*b)+K_vc+d=0,
\end{align*}
where $a,b,c,d:[0,\ell]\to\mathbb{R}^2$ are given vector fields.
We let $K_v=\operatorname{diag}[K_v^1,K_v^2]$ be a diagonal matrix.
After spatial discretization, each component of a vector field becomes an $N$-dimensional vector, and the differentiation operator $\frac{d}{ds}$ is represented by a matrix $A\in\mathbb{R}^{N\times N}$.
We thus obtain the following $N$-dimensional algebraic equation for each component $i$:
\begin{align*}
    A(R^*K_3R_*^Ta-R^*b)^i+K_v^ic^i+d^i=0,\quad i=1,2,
\end{align*}
subject to the constraints that $\theta_*^j\in[-\pi,\pi)$ and $K_v^{1,j},K_v^{2,j}>0$, $j=1,\dots,N$, which can then be solved using nonlinear optimization algorithms such as ``lsqnonlin'' in MATLAB.
Note that since the deformation is continuous, $\{\theta_*^j,K_v^{1,j},K_v^{2,j}\}$ are supposed to change continuously in time.
Thus, using the computed solution from the last time step to start the search of the current time step significantly improves the efficiency of the optimization algorithm.
In the inner loop, the control gains are chosen to be $k_u=0.5$, $k_\theta=4$ and $k_w=2$.

\textit{Result.} The position tracking task is illustrated in Fig. \ref{fig:tracking}.
The desired position is initially bent and the manipulator is able to converge to the desired position.
The $L^2$ norms of the position and velocity errors are given in Fig. \ref{fig:error}

\begin{figure}[hbt!]
    \centering
    \includegraphics[width=0.9\columnwidth]{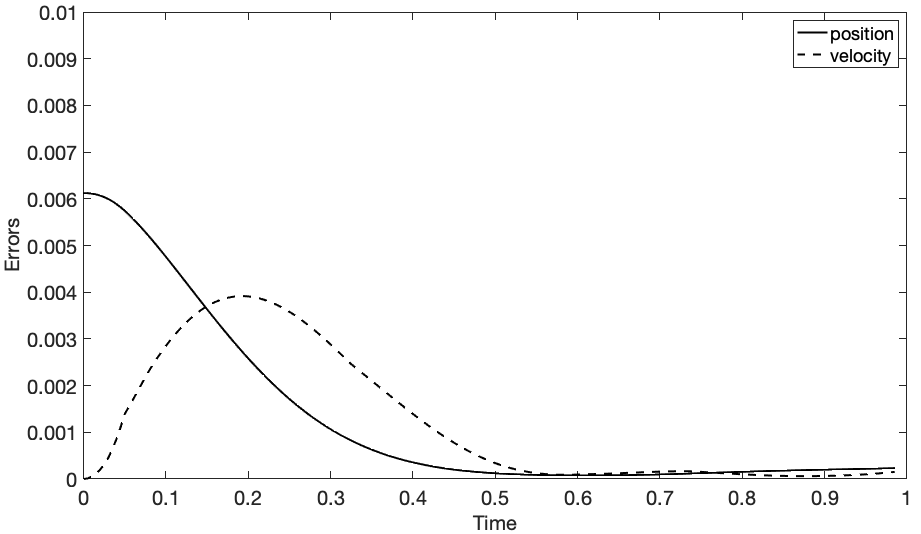}
    \caption{Tracking errors. Solid: $\|\tilde{p}\|_{L^2}$. Dashed: $\|\tilde{p}_t\|_{L^2}$}
    \label{fig:error}
\end{figure}

\section{Conclusion}
\label{section:conclusion}
In this work, we explored infinite-dimensional control theory for soft manipulators that are modeled by nonlinear Cosserat-rod PDEs.
We presented an inner-outer loop control design for position tracking in planar task space.
We first designed desired rotational motions that asymptotically achieve position tracking and then designed inputs for the rotational component to track their desired motions.
Both designs were inspired by the energy decay property of damped wave equations.
Exponential stability was proved.
These results suggested the promising role of infinite-dimensional control theory for soft robots.
Our future work is to extend the inner-outer loop design to the 3D case and test the control algorithms on a real platform.

\bibliographystyle{IEEEtran}
\bibliography{References}

\begin{thebibliography}{10}
\providecommand{\url}[1]{#1}
\csname url@samestyle\endcsname
\providecommand{\newblock}{\relax}
\providecommand{\bibinfo}[2]{#2}
\providecommand{\BIBentrySTDinterwordspacing}{\spaceskip=0pt\relax}
\providecommand{\BIBentryALTinterwordstretchfactor}{4}
\providecommand{\BIBentryALTinterwordspacing}{\spaceskip=\fontdimen2\font plus
\BIBentryALTinterwordstretchfactor\fontdimen3\font minus
  \fontdimen4\font\relax}
\providecommand{\BIBforeignlanguage}[2]{{%
\expandafter\ifx\csname l@#1\endcsname\relax
\typeout{** WARNING: IEEEtran.bst: No hyphenation pattern has been}%
\typeout{** loaded for the language `#1'. Using the pattern for}%
\typeout{** the default language instead.}%
\else
\language=\csname l@#1\endcsname
\fi
#2}}
\providecommand{\BIBdecl}{\relax}
\BIBdecl

\bibitem{rus2015design}
D.~Rus and M.~T. Tolley, ``Design, fabrication and control of soft robots,''
  \emph{Nature}, vol. 521, no. 7553, pp. 467--475, 2015.

\bibitem{burgner2015continuum}
J.~Burgner-Kahrs, D.~C. Rucker, and H.~Choset, ``Continuum robots for medical
  applications: A survey,'' \emph{IEEE Transactions on Robotics}, vol.~31,
  no.~6, pp. 1261--1280, 2015.

\bibitem{marchese2014autonomous}
A.~D. Marchese, C.~D. Onal, and D.~Rus, ``Autonomous soft robotic fish capable
  of escape maneuvers using fluidic elastomer actuators,'' \emph{Soft
  robotics}, vol.~1, no.~1, pp. 75--87, 2014.

\bibitem{della2021model}
C.~Della~Santina, C.~Duriez, and D.~Rus, ``Model based control of soft robots:
  A survey of the state of the art and open challenges,'' \emph{arXiv preprint
  arXiv:2110.01358}, 2021.

\bibitem{webster2010design}
R.~J. Webster~III and B.~A. Jones, ``Design and kinematic modeling of constant
  curvature continuum robots: A review,'' \emph{The International Journal of
  Robotics Research}, vol.~29, no.~13, pp. 1661--1683, 2010.

\bibitem{polygerinos2015modeling}
P.~Polygerinos, Z.~Wang, J.~T. Overvelde, K.~C. Galloway, R.~J. Wood,
  K.~Bertoldi, and C.~J. Walsh, ``Modeling of soft fiber-reinforced bending
  actuators,'' \emph{IEEE Transactions on Robotics}, vol.~31, no.~3, pp.
  778--789, 2015.

\bibitem{marchese2016dynamics}
A.~D. Marchese, R.~Tedrake, and D.~Rus, ``Dynamics and trajectory optimization
  for a soft spatial fluidic elastomer manipulator,'' \emph{The International
  Journal of Robotics Research}, vol.~35, no.~8, pp. 1000--1019, 2016.

\bibitem{della2020model}
C.~Della~Santina, R.~K. Katzschmann, A.~Bicchi, and D.~Rus, ``Model-based
  dynamic feedback control of a planar soft robot: trajectory tracking and
  interaction with the environment,'' \emph{The International Journal of
  Robotics Research}, vol.~39, no.~4, pp. 490--513, 2020.

\bibitem{franco2021energy}
E.~Franco and A.~Garriga-Casanovas, ``Energy-shaping control of soft continuum
  manipulators with in-plane disturbances,'' \emph{The International Journal of
  Robotics Research}, vol.~40, no.~1, pp. 236--255, 2021.

\bibitem{duriez2013control}
C.~Duriez, ``Control of elastic soft robots based on real-time finite element
  method,'' in \emph{2013 IEEE international conference on robotics and
  automation}.\hskip 1em plus 0.5em minus 0.4em\relax IEEE, 2013, pp.
  3982--3987.

\bibitem{goury2018fast}
O.~Goury and C.~Duriez, ``Fast, generic, and reliable control and simulation of
  soft robots using model order reduction,'' \emph{IEEE Transactions on
  Robotics}, vol.~34, no.~6, pp. 1565--1576, 2018.

\bibitem{simo1988dynamics}
J.~C. Simo and L.~Vu-Quoc, ``On the dynamics in space of rods undergoing large
  motions—a geometrically exact approach,'' \emph{Computer methods in applied
  mechanics and engineering}, vol.~66, no.~2, pp. 125--161, 1988.

\bibitem{antman2005nonlinear}
S.~Antman, \emph{Nonlinear problems of elasticity}.\hskip 1em plus 0.5em minus
  0.4em\relax Springer Science \& Business Media, 2005, vol. 107.

\bibitem{rucker2011statics}
D.~C. Rucker and R.~J. Webster~III, ``Statics and dynamics of continuum robots
  with general tendon routing and external loading,'' \emph{IEEE Transactions
  on Robotics}, vol.~27, no.~6, pp. 1033--1044, 2011.

\bibitem{renda2014dynamic}
F.~Renda, M.~Giorelli, M.~Calisti, M.~Cianchetti, and C.~Laschi, ``Dynamic
  model of a multibending soft robot arm driven by cables,'' \emph{IEEE
  Transactions on Robotics}, vol.~30, no.~5, pp. 1109--1122, 2014.

\bibitem{till2019real}
J.~Till, V.~Aloi, and C.~Rucker, ``Real-time dynamics of soft and continuum
  robots based on cosserat rod models,'' \emph{The International Journal of
  Robotics Research}, vol.~38, no.~6, pp. 723--746, 2019.

\bibitem{renda2020geometric}
F.~Renda, C.~Armanini, V.~Lebastard, F.~Candelier, and F.~Boyer, ``A geometric
  variable-strain approach for static modeling of soft manipulators with tendon
  and fluidic actuation,'' \emph{IEEE Robotics and Automation Letters}, vol.~5,
  no.~3, pp. 4006--4013, 2020.

\bibitem{janabi2021cosserat}
F.~Janabi-Sharifi, A.~Jalali, and I.~D. Walker, ``Cosserat rod-based dynamic
  modeling of tendon-driven continuum robots: A tutorial,'' \emph{IEEE Access},
  vol.~9, pp. 68\,703--68\,719, 2021.

\bibitem{grazioso2019geometrically}
S.~Grazioso, G.~Di~Gironimo, and B.~Siciliano, ``A geometrically exact model
  for soft continuum robots: The finite element deformation space
  formulation,'' \emph{Soft robotics}, vol.~6, no.~6, pp. 790--811, 2019.

\bibitem{george2020first}
T.~George~Thuruthel, F.~Renda, and F.~Iida, ``First-order dynamic modeling and
  control of soft robots,'' \emph{Frontiers in Robotics and AI}, vol.~7, p.~95,
  2020.

\bibitem{doroudchi2021configuration}
A.~Doroudchi and S.~Berman, ``Configuration tracking for soft continuum robotic
  arms using inverse dynamic control of a cosserat rod model,'' in \emph{2021
  IEEE 4th International Conference on Soft Robotics (RoboSoft)}.\hskip 1em
  plus 0.5em minus 0.4em\relax IEEE, 2021, pp. 207--214.

\bibitem{chang2020energy}
H.-S. Chang, U.~Halder, C.-H. Shih, A.~Tekinalp, T.~Parthasarathy, E.~Gribkova,
  G.~Chowdhary, R.~Gillette, M.~Gazzola, and P.~G. Mehta, ``Energy shaping
  control of a cyberoctopus soft arm,'' in \emph{2020 59th IEEE Conference on
  Decision and Control (CDC)}.\hskip 1em plus 0.5em minus 0.4em\relax IEEE,
  2020, pp. 3913--3920.

\bibitem{zheng2022pde}
T.~Zheng and H.~Lin, ``Pde-based dynamic control and estimation of soft robotic
  arms,'' in \emph{2022 61st IEEE Conference on Decision and Control
  (CDC)}.\hskip 1em plus 0.5em minus 0.4em\relax IEEE, 2022.

\bibitem{rodriguez2020boundary}
C.~Rodriguez and G.~Leugering, ``Boundary feedback stabilization for the
  intrinsic geometrically exact beam model,'' \emph{SIAM Journal on Control and
  Optimization}, vol.~58, no.~6, pp. 3533--3558, 2020.

\bibitem{li2001semi}
T.-T. Li and Y.~Jin, ``Semi-global c1 solution to the mixed initial-boundary
  value problem for quasilinear hyperbolic systems,'' \emph{Chinese Annals of
  Mathematics}, vol.~22, no.~03, pp. 325--336, 2001.

\bibitem{dashkovskiy2013input}
S.~Dashkovskiy and A.~Mironchenko, ``Input-to-state stability of
  infinite-dimensional control systems,'' \emph{Mathematics of Control,
  Signals, and Systems}, vol.~25, no.~1, pp. 1--35, 2013.

\end{thebibliography}

\end{document}